\algnewcommand{\IfThenElse}[3]{
  \State \algorithmicif\ #1\ \algorithmicthen\ #2\ \algorithmicelse\ #3}
\algnewcommand\algorithmicforeach{\textbf{for each}}
\begin{document}
\graphicspath{{figures/}}
\mainmatter              
\title{Collective transport via sequential caging}
%
%
\author{Vivek Shankar Vardharajan\inst{1} \and Karthik Soma \inst{2} \and Giovanni Beltrame\inst{1}}
\authorrunning{V Vardharajan} 
%
%
\institute{Polytechnique Montreal, Montreal QC H3T 1J4, Canada,\\
\email{vivek-shankar.varadharajan@polymtl.ca},
\and
National Institute of Technology, Tiruchirapalli, India}
\maketitle              

\begin{abstract} 
  We propose a decentralized algorithm to collaboratively transport
  arbitrarily shaped objects using a swarm of robots. Our approach
  starts with a task allocation phase that sequentially distributes
  locations around the object to be transported starting from a seed
  robot that makes first contact with the object. Our approach does
  not require previous knowledge of the shape of the object to ensure
  caging. To push the object to a goal location, we estimate the
  robots required to apply force on the object based on the angular difference between the target and the object. During transport, the robots follow a sequence of
  intermediate goal locations specifying the required pose of the
  object at that location. We evaluate our approach in a physics-based
  simulator with up to 100 robots, using three generic paths. 
  Experiments using a group of KheperaIV robots demonstrate the
  effectiveness of our approach in a real setting.
   
\keywords{Collaborative transport, Task Allocation, Caging, Robot Swarms}
\end{abstract}

\section{Introduction}
Several insect species exhibit an incredible level of coordination to
lift and carry heavy objects to their nest, whether for building
materials or food. Paratrechina longicornis can collectively transport
heavy food from a source location to its nest purely through local
interaction with neighboring ants~\cite{Gelblum2015}. These ants are
capable of carrying an object of arbitrary shape and ten times heavier
than their bodies by collaborating in an effective manner. Designing
approaches to realize collaborative transport using a group of robots
can find its application in warehouse
management~\cite{rosenfeld2016human} and collaborative construction of
structures~\cite{Petersen2019}.
\begin{figure}
	\centering
	\def\svgwidth{0.7\linewidth}
\begingroup%
  \makeatletter%
  \providecommand\color[2][]{%
    \errmessage{(Inkscape) Color is used for the text in Inkscape, but the package 'color.sty' is not loaded}%
    \renewcommand\color[2][]{}%
  }%
  \providecommand\transparent[1]{%
    \errmessage{(Inkscape) Transparency is used (non-zero) for the text in Inkscape, but the package 'transparent.sty' is not loaded}%
    \renewcommand\transparent[1]{}%
  }%
  \providecommand\rotatebox[2]{#2}%
  \newcommand*\fsize{\dimexpr\f@size pt\relax}%
  \newcommand*\lineheight[1]{\fontsize{\fsize}{#1\fsize}\selectfont}%
  \ifx\svgwidth\undefined%
    \setlength{\unitlength}{672.23524757bp}%
    \ifx\svgscale\undefined%
      \relax%
    \else%
      \setlength{\unitlength}{\unitlength * \real{\svgscale}}%
    \fi%
  \else%
    \setlength{\unitlength}{\svgwidth}%
  \fi%
  \global\let\svgwidth\undefined%
  \global\let\svgscale\undefined%
  \makeatother%
  \begin{picture}(1,0.47105494)%
    \lineheight{1}%
    \setlength\tabcolsep{0pt}%
    \put(0,0){\includegraphics[width=\unitlength,page=1]{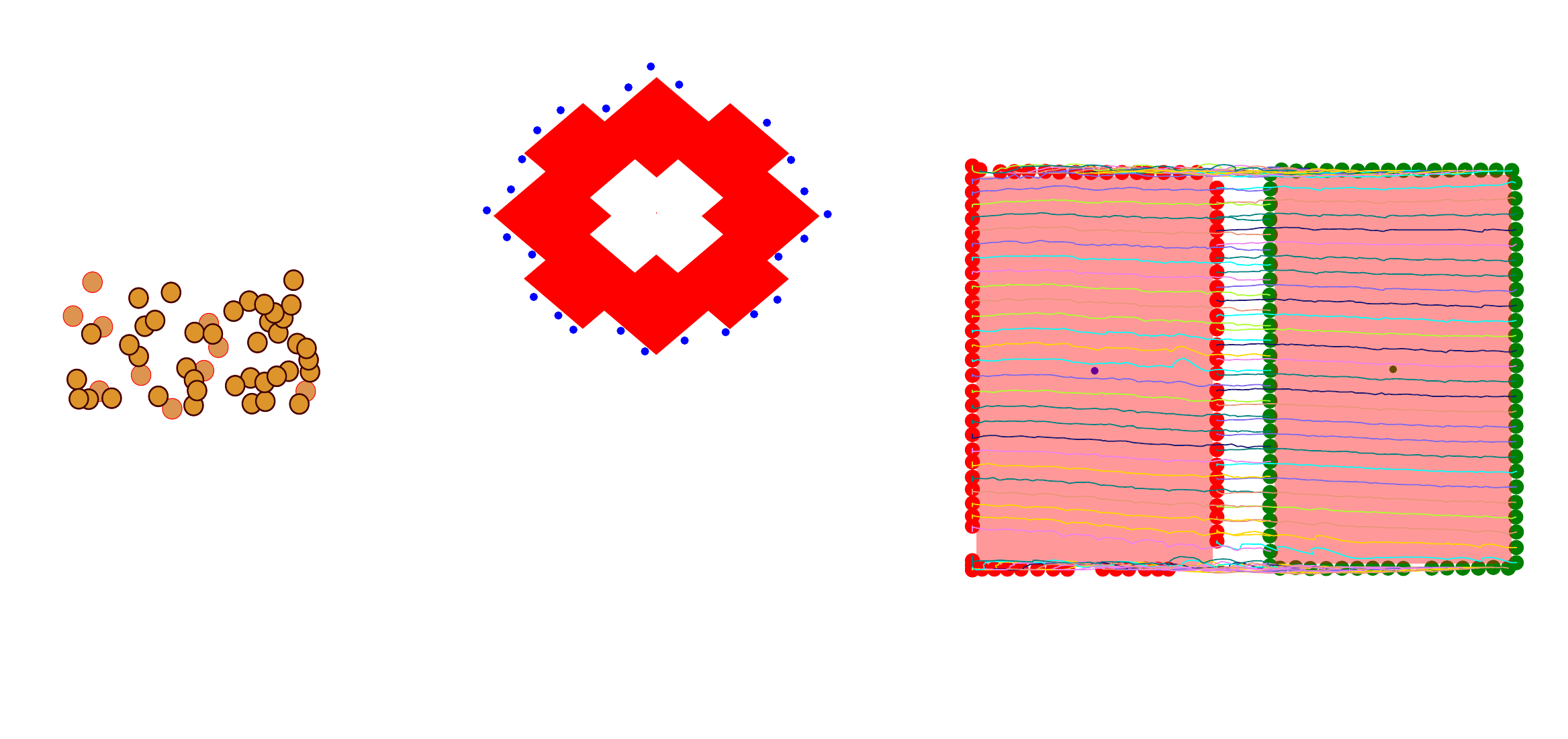}}%
    \put(0.12966044,0.40052161){\color[rgb]{0,0,0}\makebox(0,0)[t]{\lineheight{1.25}\smash{\begin{tabular}[t]{c}Deployment \\Cluster\end{tabular}}}}%
    \put(0.3553476,0.44392995){\color[rgb]{0,0,0}\makebox(0,0)[lt]{\lineheight{1.25}\smash{\begin{tabular}[t]{l}Caging\end{tabular}}}}%
    \put(0.66342642,0.42284735){\color[rgb]{0,0,0}\makebox(0,0)[lt]{\lineheight{1.25}\smash{\begin{tabular}[t]{l}Transportation\end{tabular}}}}%
    \put(0,0){\includegraphics[width=\unitlength,page=2]{Title_fig_copy.pdf}}%
  \end{picture}%
\endgroup%
 
	\caption{Illustration of the process of object transpiration, with the robots starting from a deployment cluster, caging the object and transporting the object.}
	\label{fig:title}
\end{figure}
In this work, we take inspiration from these natural insect species to
design an approach to collaboratively transport heavy objects (i.e. that
cannot be carried by a single robot) of arbitrary shape using
a swarm of robots. The main challenges of this type of
collective transport are:
\begin{inparaenum}
\item the effective placement of robots around the transported object,
\item the effective application of force around the object to avoid tugs-of-war, and
\item adapting to the objects center-of-mass movement and the alignment of forces between
  neighbors.
\end{inparaenum}
Our approach starts with a task allocation phase, where the robots are
sequentially deployed around the object, a process known as
caging~\cite{wan2012}. Completing the task allocation phase, the
robots transport the object towards a target location as shown in fig.~\ref{fig:title}.

Collaborative transport is a well-studied topic: some approaches use
ground
robots~\cite{Petersen2019,Wan2020,fink2007composition,Chen2015}
for cooperative manipulation either with explicit
communication~\cite{Franchi2019} or force based
coordination~\cite{Gabellieri2020}, while others use quadcopters
carrying a
cable-suspended~\cite{cotsakis2019decentralized} or
rigidly attached~\cite{Wang2018} object that is heavier than one
robot's maximum payload. 

Many of the approaches discussed earlier do not explicitly consider
robot's interaction with other robots to continuously maintain cage formation, while including an obstacle
avoidance mechanism within their control framework. This latter is
particularly limiting when the perimeter of the pushed object is
small, as it might prevent the robots to get close to the object to
apply an effective force. In our approach, we design a sequential
placement of robots to avoid this scenario and maintain the
initial formation continuously while moving.
    
Approaches like~\cite{melo2016collaborative} assume that the position
and shape of the object is either continuously known or periodically
updated using a central system. Measuring the position and shape of
the object in a real world scenarios might be difficult and would
limit the use of the transport system to some indoor
applications. Furthermore, some approaches either assume that the
robots are readily placed around the object to be
manipulated~\cite{Rubenstein2013} and design control strategies for
the manipulation of the object. A few approaches provide emphasis on
caging or design a control policy for a specific type of
caging~\cite{Pereira2004}. In this paper, we design a complete system
that allows the robots to start from a deployment cluster and take up
positions around the object to be transported, in a way that is
similar to~\cite{fink2007composition}. Our approach periodically
estimates the centroid of the object using only the relative
positional information shared by neighboring robots, avoiding the need
to have continuous external measurements. Our task allocation allows
heterogeneous robots with varying capabilities (e.g. path planning),
as well as provides a way to addressing robot failures~\cite{Varadharajan2020}. We provide sufficient conditions
for the convergence of our caging, and show that our approach
terminates for convex objects.

\section{Related Work \label{sec:related}}

The concept of caging was first introduced by Rimon and
Blake~\cite{Rimon1999} for a finger gripper. Caging is a concept of
trapping the object to be manipulated by a gripping actuator, in
our work we use a group of robots to act as a single entity to grip
the object using form closure as in~\cite{wan2012}. A simple form of
caging and leader-follower based
strategies~\cite{wang2004control} are employed to push
the object by sensing the resultant forces. The main constraint of
these works is that the robots cannot follow paths with sharp turns.

Pereira et al.~\cite{Pereira2004} introduce a new type of caging
called object closure, in which the object to be transported is
loosely caged until the configuration satisfies certain conditions in
an imaginary \textit{closure configuration space}. Each robot in the
team has to estimate the orientation and position of the object to use
this approach. Wan et al.~\cite{wan2012} propose robust caging to
minimize the number of robots to form closure using translation and
rotation constraints. Wan et al. also extended their work for
polygons, balls and cylinders to be transported on a
slope~\cite{Wan2020}. This approach requires continuous positional
updates from an external system and uses a central system to compute
the minimum number of robots required.

An approach to caging L-shaped objects is proposed
in~\cite{fink2007composition}: the robots switch between
different behavioral states to approach the object and achieve
\textit{potential caging}. This approach requires the robots to know
certain properties of the object beforehand, such as the minimum and
maximum diameter of the caged object. A caging strategy for polygonal
convex objects is proposed in~\cite{dai2016symmetric}, the approach
uses a sliding mode fuzzy controller to traverse predefined paths. A
leader robot coordinates the transportation using the relative position
of all the other robots.

Gerkey et. al.~\cite{gerkey2002pusher} propose a strategy for pushing
an object by assigning ``pushers'' and ``watchers''. Watchers monitor
the position of the object and other robots, while pushers perform the
pushing task. The approach provides fault tolerance to robot failures
and relies heavily on the performance of the watchers. Chen et
al.~\cite{Chen2015} propose an approach in which the robots are placed
around an object that occludes the transportation destination: the
robots that do not see the destination are the ones that push the
object. The intuition behind this placement is that the location that
is most effective for pushing the object is the occluded region,
i.e. the opposite side of the direction of movement. Our strategy is
similar to~\cite{Chen2015}, where we estimate the angular difference between the object and the goal using the proximity sensor, and only the robots that are below a threshold apply a force to the object. Our approach places robots all around the object to adjust and follow a sequence of changing target location, as opposed to only placing robots in the occluded region~\cite{Chen2015}.  

The work in~\cite{wang2006cooperative} combines reinforcement learning
and evolutionary algorithms to coordinate 3 types of agents to learn
to push an object. ``Vision robots'' estimate the positions of the
object and other robots, ``evolutionary learning agents'' generate
plans for the ``worker robots'', and these latter execute the plan to
push the object.  Alkilabi et al.~\cite{alkilabi2017cooperative} use
an evolutionary algorithm to tune a recurrent neural network
controller that allows a group of e-puck robots to collaboratively
transport a heavy object. The robots use an optical flow sensor to
determine and achieve an alignment of forces. The authors demonstrate
that the approach works well for different object sizes and shapes,
however, the proper functioning of the algorithm relies heavily on the
performance of the optical flow sensors.


\section{Methodology \label{sec:methodology}}

Fig.~\ref{fig:State machine} shows the high level state machine used
in our collaborative transport behavior. At initialization, the robots
enter into a sequence of task allocation rounds allowing the robots to
take up positions around the object to be transported with a desired
inter-robot distance. Once caging is complete, the robots that are a
part of the object cage agree on the desired object path and start
pushing and rotating the object. The path is represented as a sequence
of points, and all the robots in the cage use a barrier (i.e. they
wait for consensus) to go through each intermediate point.
\begin{figure}[tbp]
\flushleft
\includegraphics[width=0.99\linewidth]{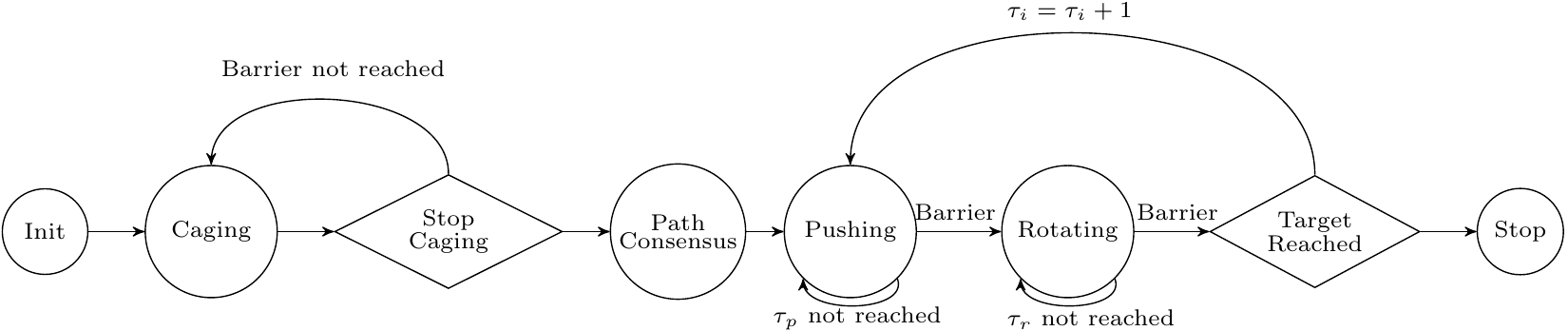}
\caption{High level state diagram}
\label{fig:State machine}
\end{figure}%
\subsection{Problem Formulation \label{subsec:problem}}
Let $C_o(t) \in \mathbb{R}^2$ be the centroid of an arbitrary shaped
object at time $t$, $x_i(t) \in \mathbb{R}^2$ the position of robot
$i$ at time $t$, and $X(t)=\{x_1(t),x_2(t),...,x_n(t)\}$ the set
containing the positions of the robots at time $t$. Let $S_o$ be a
closed, convex set representing the perimeter of the object to be
transported. Given a sequence of target locations
$\mathbb{T}=\{\tau_1,\tau_2,...,\tau_n\}$ the task of the robots is to
take up locations around the perimeter of the object $S_o$ with a
desired inter-robot distance $I_d \in \mathbb{R}$ and drive the
centroid of the object $C_o$ from a known initial state $C_o(0)$ to a
final state $C_o(t_f)$ at some time $t_f$, passing through the target
locations in $\mathbb{T}$.

We assume that the robots can perceive the goal in the environment and
know an estimate of the initial centroid location $C_o(0)$ of the
object to be transported. The mass of the object is assumed to be proportional to the size with a minimum density for the object. We also assume that the line connecting two
subsequent targets $\tau_{x-1}$ and $\tau_x$ does not go through
obstacles and the perimeter of the object $S_o$ to be transported is
greater than $3*I_d$. We consider a point mass model for the robots
($\dot{x}_i(t) = u_i$) and assume that the robots are fully
controllable with $u_i$. The robots are assumed to be equipped with a range and bearing sensor to determine the relative positional information and communicate with neighboring robots within a small fixed communication range $d_C$. In our experiments, we used KheperaIV robots
that are equipped with 8 proximity sensors at equal angles around the
robot, and we assume similar capabilities in general.

\begin{figure}
	\centering
	\def\svgwidth{0.99\linewidth}

	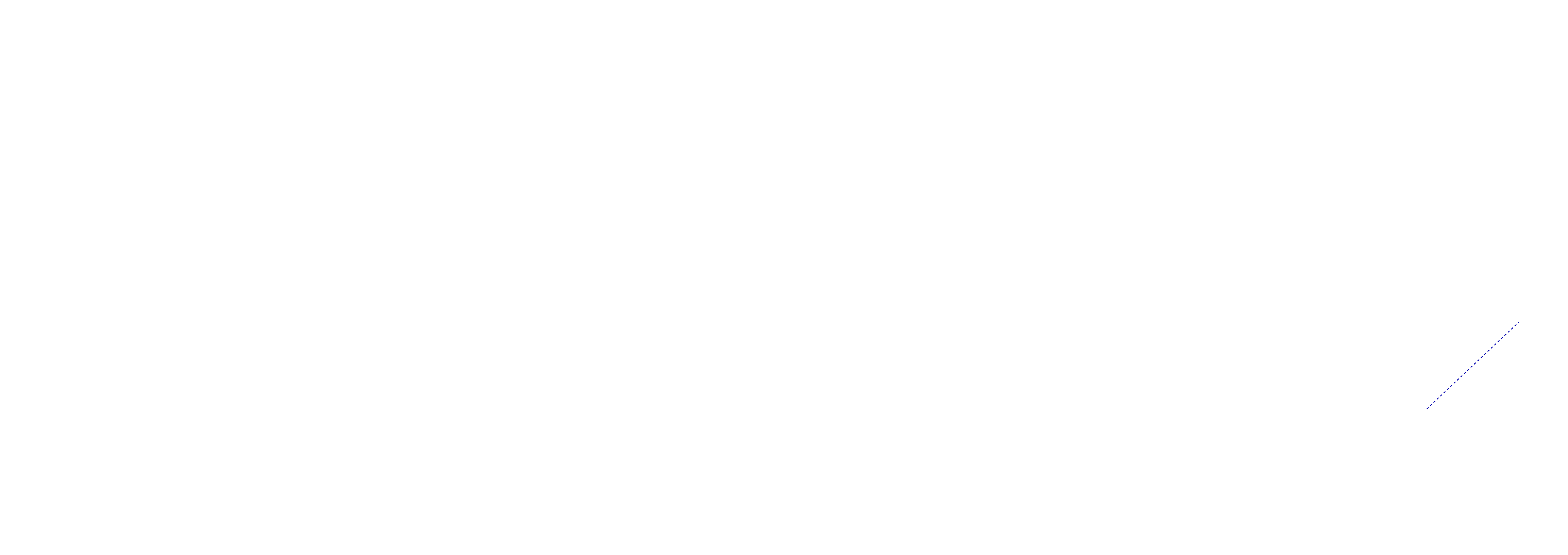 

	\caption{Illustration of the process of task allocation based caging  (a) illustrate the process of edge following to reach the new target (b) the stopping condition to terminate caging.}
	\label{fig:cage}
\end{figure}

\subsection{Task Allocation based Caging}

Consider a group of robots randomly distributed in a cluster and a
known initial position of the object. The goal of the robots is to
deploy to suitable location around the perimeter of the object $C_o$
to guarantee object closure while respecting the required inter-robot
distance $I_d$.

The caging process starts with the allocation of the first task (an
estimation of the centroid of the object) to a seed robot (closest
robot to the object, elected via bidding) as shown in
Figure~\ref{fig:cage} (a). The seed robot moves towards the center of
the object until it detects the object with its on-board proximity
sensors. As the seed robot touches the object, it creates two target
locations (one to its left and one to its right, called
\emph{branches}). The robots bid for these new target locations and
continue the process of spawning the new targets along their branch
until the minimum distance between robots in the two branches is
smaller than $d_T$.

Our task allocation algorithm performs the role of determining the
appropriate target around the object for each robot, the \emph{caging
  targets}. We consider a Single Allocation (SA)
problem~\cite{Choi2009}, where every robot is assigned a single
task. The caging targets are sequentially available to the robots,
i.e. a new target becomes available after a robot has reached its
target. Note that these targets are considered to be approximate
(created by establishing a local coordinate system like
in~\cite{Rubenstein795}), hence they are refined by the robots using
their proximity sensors and the position of the closest robot in the
branch on reaching the assigned target. The approach of sequential caging is particularly appropriate for scenarios where the shape of the object is initially or continuously unknown, the robots sequentially assign robots to the closure and enclose the transported object.

We use a bidding algorithm~\cite{Varadharajan2020} (described in the
supplementary material \footnote{https://mistlab.ca/papers/CollectiveTransport/}): the robots locally compute
bids for a task and recover the lowest bid of the team from a
distributed, shared tuple space~\cite{Pinciroli:vs:2016}. The robots
update the tuple space if the local bid is lower, with conflicts
resolved using the procedure outlined
in~\cite{Varadharajan2020}). After a predetermined allocation time
($T_a$), the lowest bid in the tuple space is declared as the
winner. $T_a$ has to be selected considering the communication
topology and delays to avoid premature allocation as detailed
in~\cite{Varadharajan2020}.

To reach the assigned target, the robots edge-follow the neighbors in
their target branch. The control inputs ($u_i$) are generated using
range and bearing information from neighbors: the robots find their
closest neighbor and create a neighbor vector $x_n$ using the range
and bearing information. The control inputs are then
$u_i= (\perp x_n)+(||x_n||- I_d)*x_n $; the first term makes sure the
robot orbits the neighbor and the second term applies a pulling or
pushing force to keep the robot at distance $I_d$. On reaching the
target (detected using the proximity sensors), the robot measure the
distance to the closest neighbor of the branch and apply a distance
correction to keep the inter-robot caging distance to be $I_d$. The
robot creates an obstacle vector using the proximity values:
$x_o=\frac{\sum_{i \in P} p_i}{|P|}$, with
$P=\{p_0,..,p_7\},p_i\in \mathbb{R}^2$ being the set containing the
individual proximity readings as vectors. The inter-robot distance
correction control inputs are generated using: $u_i=\perp x_o$. When
there are not enough robots to complete the caging, the robots can
adapt by increasing $I_d$ and applying inter-robot distance correction
control until the termination condition is met.

\noindent \textbf{Proposition 1} Consider two sets L and R denoting
the left and right branch respectively, L contains all the attachment
points of the left branch robots to the object, and R contains the
attachment points on the right branch. The caging terminates if
$\exists p \in L, q \in R$ such that $d_{pq} \leq d_{T}$ while
$d_{lr} > d_{T} \quad \forall l \in L-\{ p \} \wedge \forall r \in
R-\{ q \} $.

Referring to fig.~\ref{fig:cage}(b), consider two closed, convex
  sets, $L_o =\{l_{1}, ..., l_{n}\}$ containing all the points on the
  curve $l_{1}l_{n}$ and $R_{o} = \{r_{1}, ..., r_{n}\}$ containing
  all the points on the curve $r_{1}r_{n}$. Set $L_o$ and $R_o$ are ordered and the distance between the two constitutive points satisfy $d(l_n,l{n-1}) > d_T$. 
  

\subsection{Behaviors for pushing and rotating}
Once two robots around the perimeter of the object satisfy the
termination condition and a consensus is reached on the path, the
robots initiate the target following routine. The path is represented
as a sequence of desired object centroid target locations $\mathbb{T}$
and each entry $\tau_i=(\tau_p,\tau_r)$, with
$\tau_p \in \mathbb{R}^2$ and $\tau_r \in [0,2\pi]$. $\tau_p$ is the
local target location and $\tau_r$ is the desired object orientation
along the z-axis (yaw) at that target location. The main intuition
behind having these local targets is to use a geometric path
planner. One of the robot in the swarm with the ability to compute a path to the user defined target $\tau_n$ compute the path and share it as a sequence of states(targets) using virtual stigmergy~\cite{Pinciroli:vs:2016}. The robots sequentially traverse the targets in $\mathbb{T}$,
on reaching a $\tau_p$, the robots rotate the object to $\tau_r$. Each
robot in caging computes $u_{fp}$ as in equ.~\ref{equ:fpush} to exert
a forward force and push the object. Similarly, for rotating the
object the robots apply $u_{fr}$ as in equ.~\ref{equ:frotate}.
\begin{align}
     u_{fp} &= u_t + u_f + u_{cp}, \label{equ:fpush} \\
     u_{fr} &= u_r + u_f + u_{cr} \label{equ:frotate}
\end{align}
where, $u_t$ and $u_r$ are a force to move the object towards the
target by pushing, and a torque to rotate the object to the desired
angle, respectively. $u_f$, as shown in equ.~\ref{equ:formation}, is
the contribution that makes sure the robots stay in the same
formation. $u_{cp}$ (equ.~\ref{equ:center_push}) and $u_{cr}$
(equ.~\ref{equ:center_rot}) are contributions that ensure the robots
stay in contact with the object during pushing and rotation.

\noindent \textbf{Maintaining Formation} The robot formation from the
caging operation tends to get distorted as a result of its application
of pushing force on the object to move it towards the target. The
robots in the cage apply a force to stay in this formation throughout
the transportation task: they store a set
$\textit{N}_{f} =\{(d_i,\theta_{i}) \vert d_i \leq k*I_d, \forall i
\in \textit{N} \}$ that contains the range and bearing measurements of
their neighbors, with $k$ being a design parameter. The control input
$u_f$ to maintain formation is:
\begin{equation}
      \label{equ:formation}      
      u_f =  \sum_{\forall i \in \textit{N}_{f}}^{}{\frac{K_f(d_{i} - d_{cur})}{d_{i}}} \begin{bmatrix}   
           d_{i}\cos{\theta_{i}} - \cos{\frac{\theta_{i} - \theta_{cur}}{\theta_i}}     \\
          d_{i}\sin{\theta_{i}} - \sin{\frac{\theta_{i} - \theta_{cur}}{\theta_i}} 
          \end{bmatrix}
\end{equation}
The first term in the equ.~\ref{equ:formation} is the desired
inter-agent distance correction, while the second term applies the
desired orientation correction. This formulation is inspired from the
commonly used edge potential to preserve a lattice structure among the
robots~\cite{mesbahi2010graph}. We apply this edge potential among adjacent robots in
a cage to preserve the formation and the desired
inter-agent distance.

\noindent \textbf{Maintaining contact with the Object} The robots in
the formation need to determine if they need to apply a force and stay
in contact with the object. During pushing, the robots apply a control
input to stay in contact with the object, determining its
effectiveness in pushing as in equ.~\ref{equ:center_push}.
The effectiveness of a robot's pushing depends on the position of the robot with
respect to the object and the target.

The angle $\theta_p$ (a parameter) determines if the robot is an
effective pusher: if the angle between the object $x_o$ and the target $\tau_{p}$ is greater or equal to $\theta_p$, pushing
is considered effective, and the robots apply an input $u_{cp}$ to
maintain contact. Similarly, the robots apply $u_{cr}$ to maintain
contact during rotations:
\begin{align}
      u_{cp} &=
      \begin{cases} 
      [0,0]^{T}, &   \angle (x_o, \tau_{p}) \geq \theta_p \\
      \frac{K_{cp}x_o}{||x_o||},  &  \angle (x_o, \tau_{p}) < \theta_p     
      \end{cases} \label{equ:center_push} \\
      u_{cr} &=
      \frac{K_{cr} x_o }{|| x_o ||} \label{equ:center_rot}
\end{align} 
where $x_o$ is the proximity vector that determines the current object location
in robots coordinate system, and
$\theta_p$, $K_{cp}$ and $K_{cr}$ are design parameters. $\theta_p$ is a design parameter that defines the effective pushing perimeter around the object, as shown in fig.~\ref{fig:goal_proof}.

\noindent \textbf{Applying forces} The robots have to exert force in
the right direction to move and rotate the object according to the
targets in $\mathbb{T}$. The robots must apply force
in the desired angular window around the perimeter of the object to
avoid tugs-of-war. The control inputs $u_t$ and $u_r$ make sure the
robots exert the force in the right direction:

\begin{align}
 u_{t} &=
 \begin{cases} 
      [0,0]^{T}, & ||\tau_{p_{l}} - x_i|| \leq d_{tol}   \\
      \frac{K_t[\tau_{p_{l}} - x_i]}{||\tau_{p_{l}} - x_i||},  & ||\tau_{p_{l}} - x_i|| > d_{tol}    
   \end{cases} \label{equ:force_push}\\
 u_{r} &= 
 \begin{cases} 
      \begin{bmatrix}   
          0 & -1\\
          1 & 0 
      \end{bmatrix}
      \frac{K_{r} (x_i-C_o)}{||x_i-C_o||}, & \angle (\tau_p,C_o) < \theta_{r}\\   
                  [0,0]^{T},  &   \angle (\tau_p,C_o) \geq \theta_{r}       
 \end{cases} \label{equ:force_rot}
\end{align}
where, $d_{tol}$ is a design parameter that defines the distance
tolerance, $\tau_{p_{l}}$ is the local target computed by the robot
using the centroid position and its position along the perimeter of
the object. On reaching an intermediate target $\tau_i$ the robots
share their approximate position with respect to a common coordinate
system computed as in~\cite{Rubenstein795} in a distributed, shared
tuple space~\cite{Pinciroli:vs:2016} with all the other robots. The
robots retrieve this positional information and compute the centroid
of the object $C_o$, which is then used during the rotation of the
object.

\begin{figure}
	\centering
	\def\svgwidth{0.7\linewidth}
	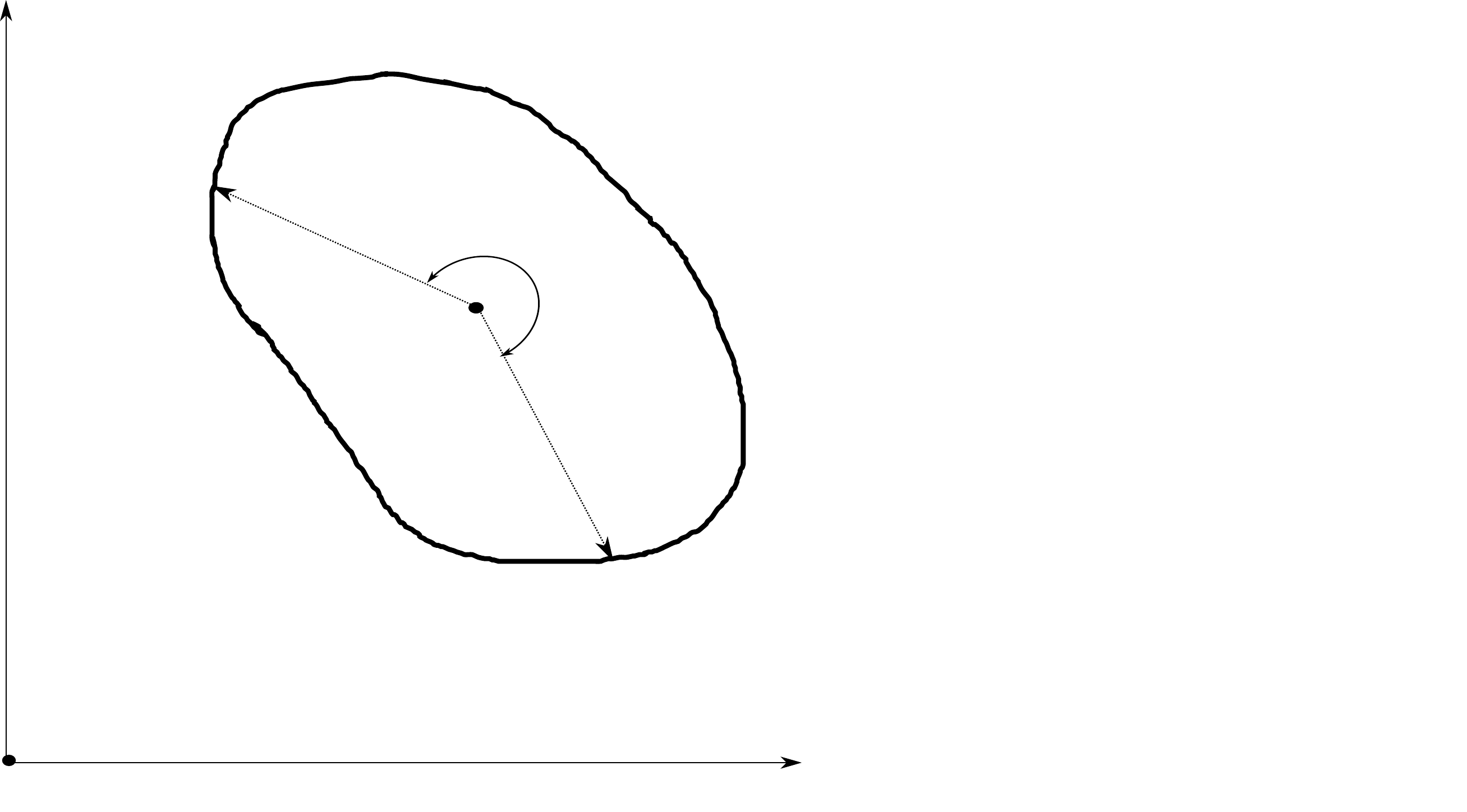 
	\caption{Illustration of the resultant force and the angle of effective robots, the effective pushing positions on the perimeter of the object is shown in green.}
	\label{fig:goal_proof}
\end{figure}

As in~\cite{Chen2015}, we can prove that the object reaches the goal
as $t\to\infty$, with the difference being that the robots exerting
force are not based on the occluded perimeter of the object, but are
instead the robots satisfying $\angle (x_o, \tau_{p}) < \theta_p$.

\begin{theorem}
  The distance between the centroid of the object $C_o$ and the target
  location $\tau_p$ strictly decreases if the velocity of the
  transported object is governed by the translation dynamics equation
  of the object $\dot{v_o}=kF$. For $t\to\infty$, the center of the
  object $C_o$ reaches $\tau_p$, where $\dot{v_o}$ is the derivative
  of the object velocity and $kF$ is the fraction of the force that is
  transfered to the object from the robots.
\end{theorem}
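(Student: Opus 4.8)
The plan is to exhibit a Lyapunov function in the distance-to-target and show it is strictly decreasing along the closed-loop dynamics, then strengthen the dissipation bound to conclude convergence. I would take $V(t) = \tfrac{1}{2}\lVert C_o(t) - \tau_p \rVert^2$, which is positive definite and vanishes exactly at $C_o = \tau_p$. Differentiating along the object trajectory gives $\dot V = (C_o - \tau_p)^{T}\,\dot C_o$, so the whole problem reduces to controlling the sign of the inner product between the displacement-to-target and the object velocity. Since the object is overdamped on the supporting surface (its mass is proportional to size with a minimum density, and its motion is resisted by friction), the velocity is collinear with the net transferred force, $v_o = \dot C_o \propto kF$; substituting yields $\dot V = k\,(C_o - \tau_p)^{T} F = -k\,(\tau_p - C_o)^{T} F$. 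Hence it suffices to prove the single geometric claim that the net transferred force $F$ has a strictly positive projection onto $\tau_p - C_o$ whenever $C_o \neq \tau_p$.

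The core of the argument is therefore this force-direction claim, and it is where I expect the real work to be. Only the robots satisfying $\angle(x_o,\tau_p) < \theta_p$ push, and by equ.~\ref{equ:force_push}--\ref{equ:center_push} each such robot contributes a force combining $u_t$, which points from its contact point toward the local target, with a contact term $u_{cp}$ directed along the inward object normal. Placing the origin at $\tau_p$ as in fig.~\ref{fig:goal_proof} and orienting the axes so that $\tau_p - C_o$ is the reference direction, I would decompose each robot's contribution into a component parallel to $\tau_p - C_o$ and a perpendicular one. The effective-pusher window is an arc of the perimeter symmetric about the line $C_o\tau_p$ (exactly what $\theta_p$ carves out, the green region of fig.~\ref{fig:goal_proof}); I would argue that the perpendicular components of robots placed symmetrically about this line cancel in pairs, while every parallel component is non-negative and is strictly positive for at least the robot nearest the anti-target point. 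Summing over the effective set gives $(\tau_p - C_o)^{T}F > 0$, which with the expression above yields $\dot V < 0$ and hence strict monotone decrease of the distance.

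For the limit $C_o \to \tau_p$ as $t\to\infty$, I would note that $V$ is bounded below and non-increasing, so it converges; strict decrease alone does not force the limit to be zero, so I would strengthen the bound. Using the explicit lower bound on the parallel force contribution from the previous step, I would show $\dot V \le -c\,\lVert C_o - \tau_p\rVert$ for a positive constant $c$ as long as the object lies outside the tolerance $d_{tol}$ of equ.~\ref{equ:force_push}; this linear-in-distance dissipation drives $V$ into the sublevel set $\lVert C_o - \tau_p\rVert \le d_{tol}$ in finite time, and by taking $d_{tol}$ arbitrarily small (or appealing to a LaSalle invariance argument on the zero set of $\dot V$) the centroid converges to $\tau_p$.

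The main obstacle is the geometric force-direction claim, and it hides three difficulties I would treat carefully: (i) the set of effective pushers is not fixed but changes as the object translates and the shared estimate $C_o$ is updated, so I must show the net force keeps pointing toward the target throughout, not merely at one instant; (ii) the perpendicular-cancellation step relies on an approximate symmetry of the placement produced by the caging phase, which is only guaranteed up to the inter-robot spacing $I_d$, so the perpendicular residual must be bounded and shown to be dominated by the parallel component; and (iii) the contact forces $u_{cp}$ act along the inward normal and could oppose progress on parts of the arc, so I would need $\theta_p$ chosen small enough that their net parallel contribution does not cancel that of the $u_t$ terms. The overdamped assumption is also doing quiet but essential work: without the friction-induced collinearity of $v_o$ and $F$, a purely inertial model $\dot v_o = kF$ would oscillate about $\tau_p$ rather than converge, so I would state this damping hypothesis explicitly.
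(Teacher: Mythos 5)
Your proposal has the same skeleton as the paper's proof --- squared distance to the target as a Lyapunov function, reduced to the claim that the net transferred force has strictly positive projection onto $\tau_p - C_o$ --- but it establishes that claim differently and is more careful exactly where the paper is loose. The paper invokes the closed-form identity of~\cite{Chen2015}: summing the unit pushing vectors $u_t$ over the effective arc $cTd$ gives a resultant equal to the chord $(d-c)$ rotated by $\pi/2$, which points from $C_o$ toward $\tau_p$ because the window carved out by $\theta_p$ is symmetric about the line $C_o\tau_p$; your pairwise cancellation of perpendicular components is the discrete, elementary version of that same identity, at the cost of having to bound the asymmetry residual left by the finite spacing $I_d$, which the integral identity cancels exactly. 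You genuinely improve on the paper in two places. First, the theorem states acceleration-level dynamics $\dot{v}_o = kF$, yet the paper's computation $\dot{d}_g = 2k\,C_o \cdot F$ tacitly uses the velocity-level relation $\dot{C}_o = kF$; your explicit overdamped hypothesis (velocity collinear with force) is precisely what licenses that substitution, and your observation that the literal inertial model would oscillate about $\tau_p$ rather than converge is correct. Second, the paper's closing step --- from $C_o \cdot F < 0$ it infers $\lim_{t\to\infty}\dot{d}_g = 0$ and ``hence $C_o$ reaches $\tau_p$'' --- is a non sequitur as written: monotone decrease plus boundedness below gives convergence of $d_g$, not convergence to zero. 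Your strengthened dissipation bound $\dot{V} \le -c\,\|C_o - \tau_p\|$ outside the $d_{tol}$ ball (or the LaSalle argument) supplies the missing step. Note, though, that the three difficulties you flag (time-varying pusher set, placement asymmetry, and the normal contact term $u_{cp}$ opposing progress) are not resolved by the paper either; it idealizes them away by assuming every robot on the arc pushes with an exact unit vector $u_t$, so your treatment is, if anything, the more honest account of what remains to be proved.
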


\begin{proof}
Fig.~\ref{fig:goal_proof} shows the resultant F transferred from the
robots to the object and the effective angular window (along the curve
cTd) on the perimeter of the object to exert force. Consider all the
robots along the curve cTd are applying a force using a control input
determined by the unit vector $u_t$. The overall force transferred to
the object is $F = (c_x - c_y) - (d_x - d_y)$, which is the tangent
vector $(d-c)$ rotated by ($\frac{\pi}{2}$)~\cite{Chen2015}.

Consider the squared distance between the target $\tau_p=[0,0]^T$ and
centroid $C_o$ at time t to be $d_g(t) = ||\tau_p - C_o||^2$, taking
the time derivative gives $\dot{d_g}= 2k*C_{o}*F$, substituting F with
the resultant force gives
$\dot{d_g}= k*((C_{o_{y}}c_x - C_{o_{x}}c_y) - (C_{o_{y}}d_x -
C_{o_{x}}d_y))$.  The distance $d_g(t) \geq 0, \forall t > 0$, when
the center of the object lies outside the desired goal $\tau_p$ and
since $C_{o}*F < 0$ is strictly decreasing because of the force
applied by the robots, we get $\lim_{t \to \infty} \dot{d_g}=0$. In
other words, the center $C_{o}$ will eventually reach the goal
$\tau_p$.
\end{proof}

\section{Experiments \label{sec:experiments}}
We performed a set of experiments in a physics-based simulator
(ARGoS3~\cite{Pinciroli:SI2012}) with a KheperaIV robot model under
various conditions to study the performance of our approach. We
implemented our behavioral state machine for the robots using
Buzz~\cite{PinciroliBuzz2016}. We set the number of robots
$N_r \in \{25,50,100\}$ and adapt the size
$S \in \{[2,2],[3.6,6],[7.2,12]\} m$ and mass
$M \in \{5.56,30.024,120.096\} kg$ of a cuboid object according to the
number of robots. The mass of the object is calculated assuming a
constant density hollow material. In another set of experiments, we
used three irregular objects: cloud, box rotation, and clover. We set
the design parameters of the algorithm to the values shown in
Tab.~\ref{tab:exp}. We choose the gain parameters for maintaining
formation($k_f$), contact with object($k_{cp}$) and force application
($k_t$) based on several rounds of trail-and-error simulations. The
tolerance parameters $d_{tol}$ and \textit{Orient. tol.} are chosen
to fit our non-holonomic robots: a large part of the error shown in
fig.~\ref{fig:error} is due to the non-holonomic nature of
our robots. We evaluate the various performance metrics over three
benchmark paths: a straight line, a zigzag, and straight line with two
\ang{90} rotations (straight\_rot in Fig.~\ref{fig:trajectory}). All
the paths consists of 9 waypoints (WPs) and straight\_rot has its
rotations at WP 3 and 6. Each experiment is repeated 30 times with
random initial conditions.
\begin{table}
\caption{Experimental parameters}
\label{tab:exp}
\centering
\begin{tabular}{ccc}
\begin{tabular}{ |c|c| }
 \hline
 \multicolumn{2}{|c|}{Caging} \\
 \hline
 $d_s$ &  0.35m \\
 \hline 
 $I_d$ & 0.45m  \\ 
 \hline
 $d_{tol}$ & 0.05m \\
 \hline
 $d_T$ & $1.85I_d$ \\ 
 \hline
 $K_t$ & 30 \\
 \hline
 Prox. thres. & 0.7\\
 \hline
\end{tabular}
&
\begin{tabular}{ |c|c|  }
 \hline
 \multicolumn{2}{|c|}{Pushing} \\
 \hline
 $\theta_p$ & $115^{\circ}$\\
 \hline
 $K_{cp}$& 40($<115^{\circ}$), 20($\geq115^{\circ}$)\\
 \hline
 $d_{tol}$ &  0.1m\\
 \hline
 $K_{f}$&  40\\
 \hline
 $K_t$ & 60\\
 \hline
 Barrier & 90\%\\
 \hline
\end{tabular}
&
\begin{tabular}{ |c|c|  }
 
 \hline
 \multicolumn{2}{|c|}{Rotating} \\
 \hline
 $K_{cr}$& 450\\
 \hline
 Orient. tol. & $5.72^{\circ}$\\
 \hline
 $K_{f}$& 400\\
 \hline
 $K_r$ & 600\\
 \hline
 Barrier & 90\%\\
 \hline
\end{tabular}
\end{tabular}
\end{table}
\begin{figure}[tbp]
\centering
\includegraphics[width=0.99\linewidth]{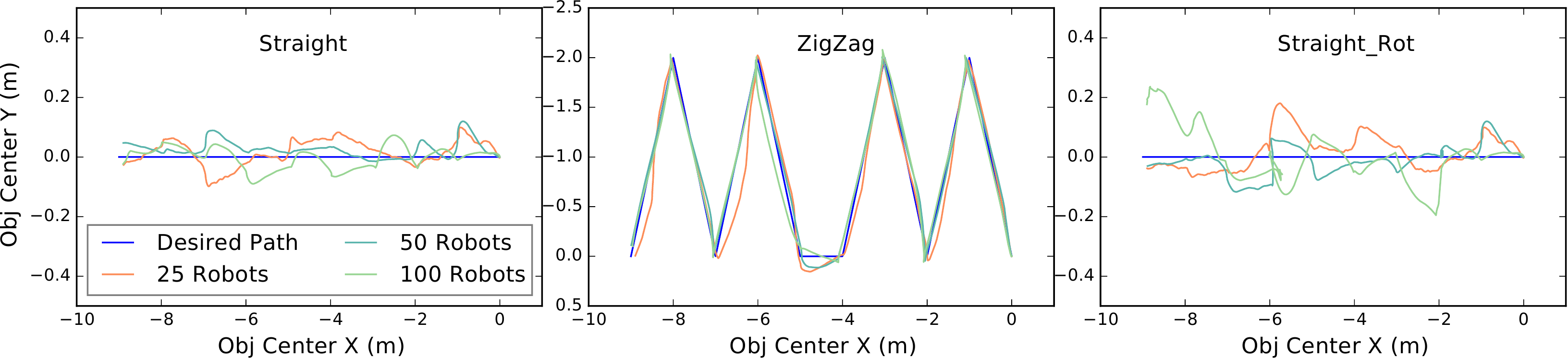}
\caption{Trajectory taken by the centroid of the object vs the desired path in the three benchmarking paths.}
\label{fig:trajectory}
\end{figure}%

\noindent \textbf{Results} We assess the performance of the algorithm
observing the time taken to cage the object and push it along the
benchmark paths, plotted in Fig.~\ref{fig:time}. The time to cage the
object increases with the perimeter of the object: the median times to
cage are \SIlist{247;779;2753}{\second} for \numlist{25;50;100}
robots, respectively transporting objects of size
$\{2,2\}$, $\{3.6,6\}$, $\{7.2,12\}$. The 3 irregular shapes took around
\SI{300}{\second} to cage when using 30 robots. The time taken to push
the object is approximately \SI{100}{\second} for the straight path
(regardless of the object size) and about \SI{160}{\second} for the
zigzag with \numlist{25;50} robots.

When using 100 robots for the straight line and the zigzag, the system
was slightly faster, which could be explained by the higher cumulative
force exerted by the robots. The time taken following a straight path
with rotations increases sub-linearly with the number of robots with
median times being \SIlist{135;155;200}{\second} for
\numlist{25;50;100} robots, respectively.
\begin{figure}[tbp]
\centering
\includegraphics[width=0.99\linewidth]{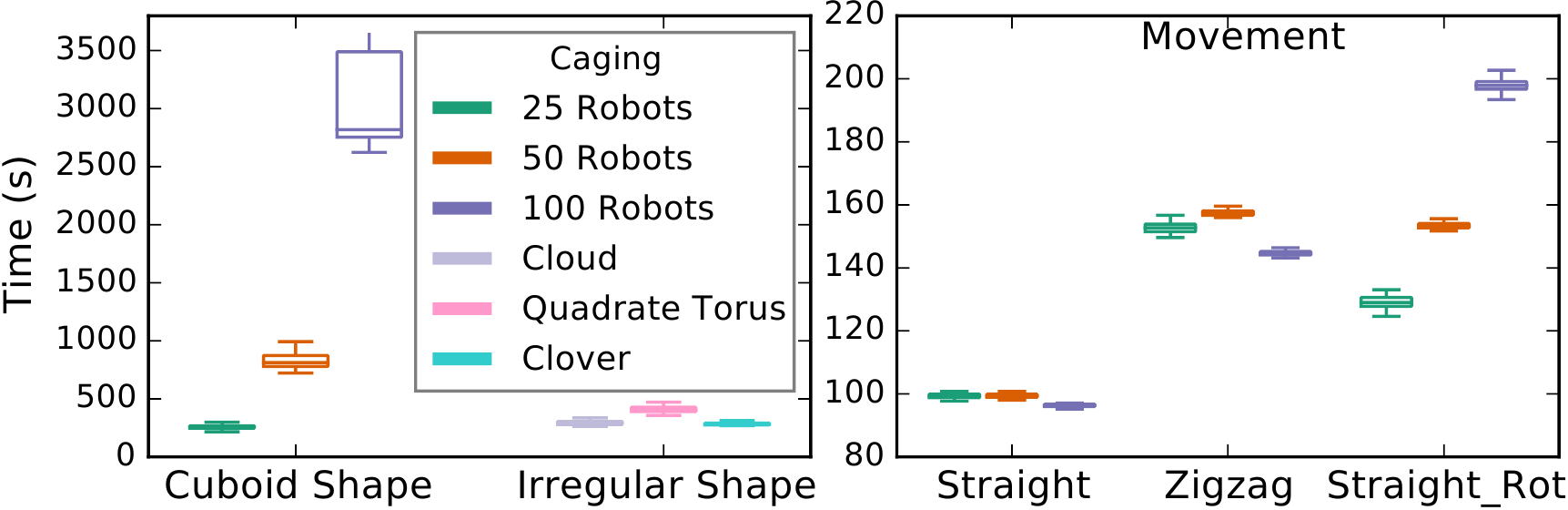}
\caption{Time to complete caging (left) and push an object along 3 paths (right).}
\label{fig:time}
\end{figure}%
\begin{figure}[tbp]
\centering
\includegraphics[width=0.99\linewidth]{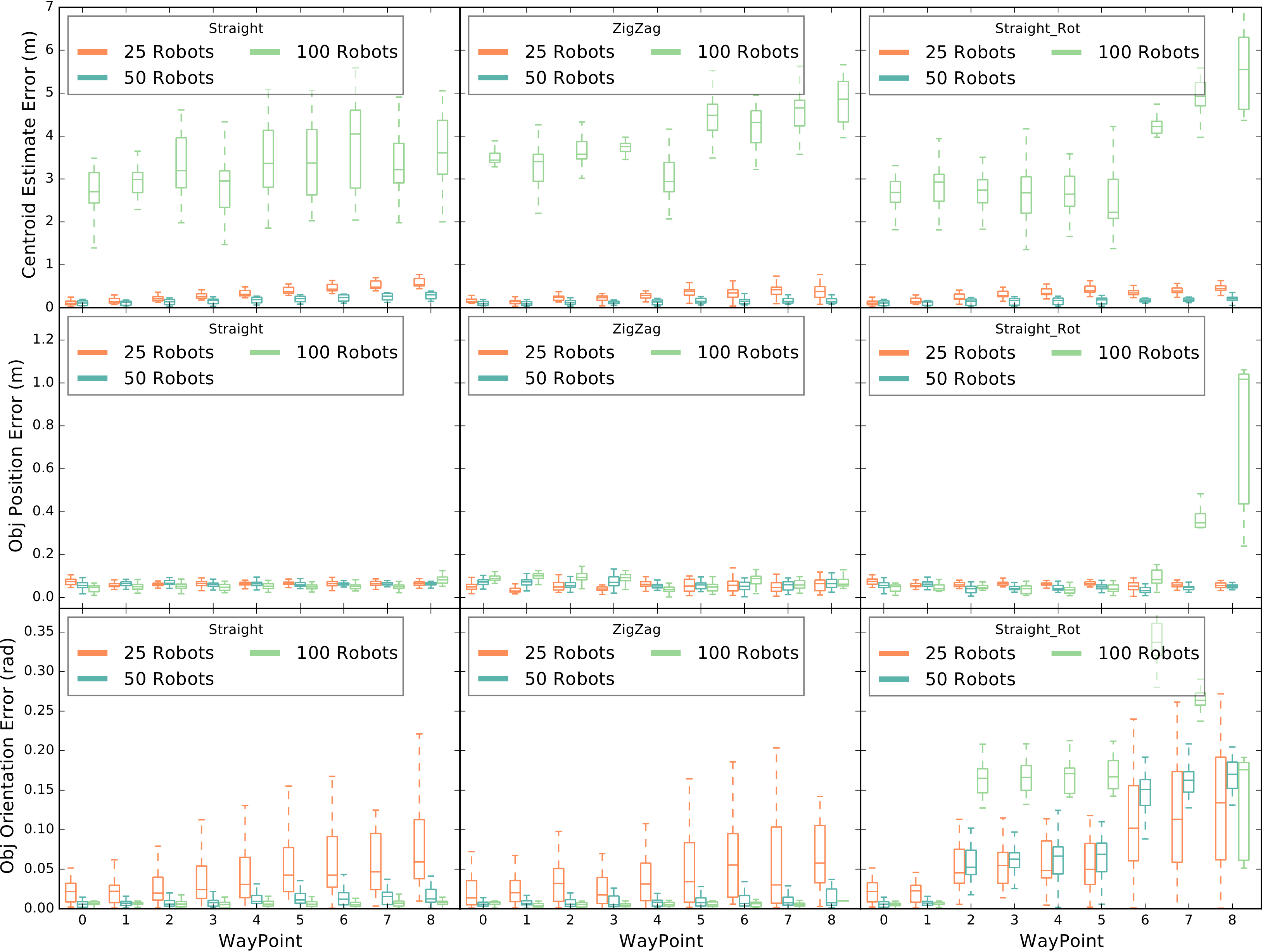}
\caption{From top to bottom: the first plot shows the average centroid
  estimation error, the second shows the object centroid position error, and
  the third shows the object orientation error; from left to right, the
  figure shows the results for our 3 benchmark paths. }
\label{fig:error}
\end{figure}%
\begin{figure}[tbp]
\centering
\includegraphics[width=0.99\linewidth]{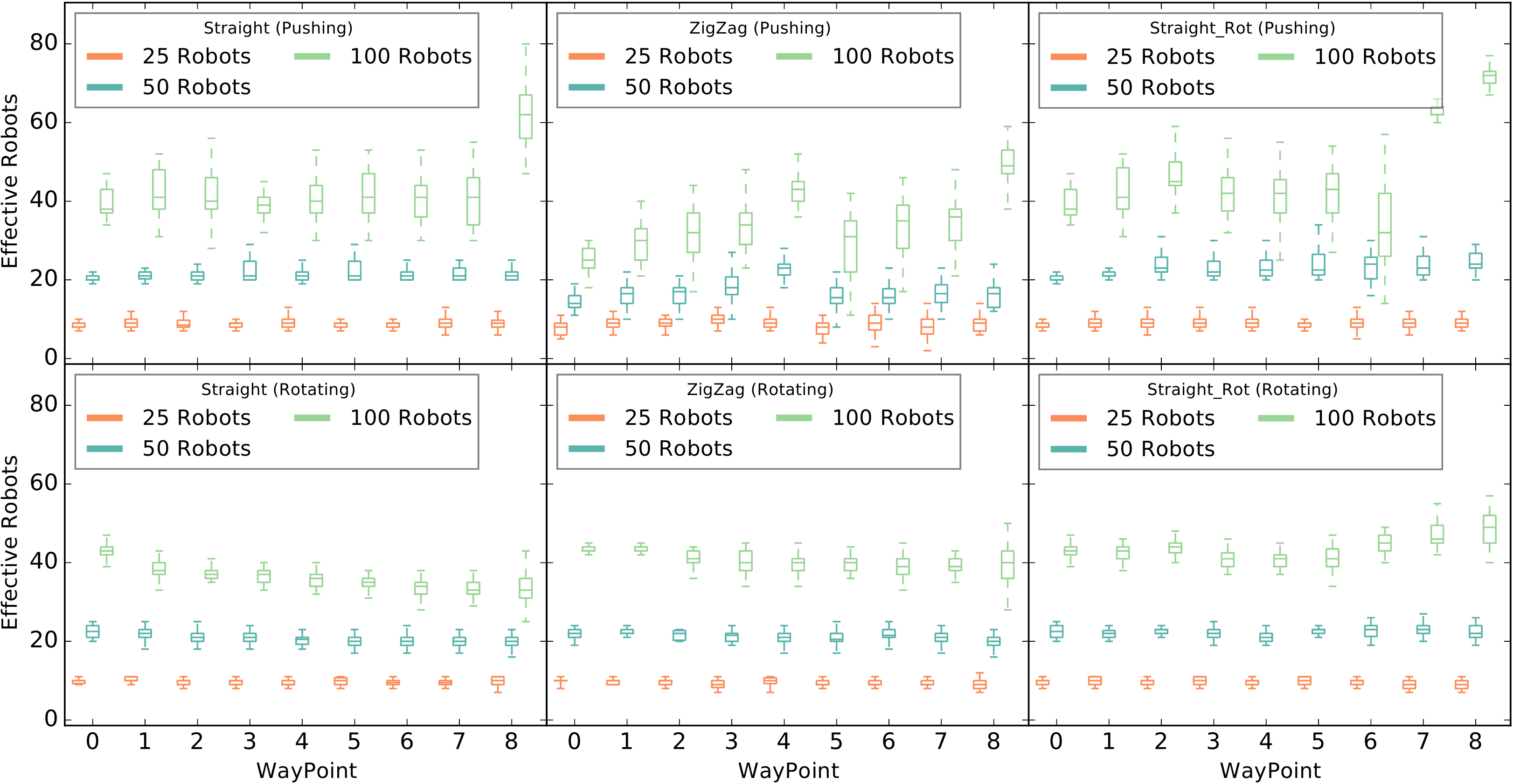}
\caption{Number of robots that were effective in pushing and rotating at different waypoints of the three benchmark paths.}
\label{fig:perf}
\end{figure}%
Fig.~\ref{fig:error} shows the centroid estimation error, position
error and orientation error on each row from left to right. The
centroid estimation error increases as the robots progress along the
path, which can be explained by the distortion in formation as the
robots progress towards the final target. The centroid estimation
error for 100 robots is relatively large and shows some variability,
which could be largely influenced by the communication topology during
the centroid estimation process, as detailed in
sec.~\ref{sec:methodology}. The object position error computed using
the difference between the desired position and the ground truth
position appears stable around 0.1m, which is within our design
tolerance ($d_{tol}$). In the $straight_{rot}$ case with 100 robots,
the position error drastically increases around the final WPs, likely
due to the drift induced by the second rotation. The orientation error
accumulates slowly for the other paths, likely because the pushing
force applied towards the target induces a small torque. Without
global positioning, the error accumulates at every rotation.

Fig.~\ref{fig:perf} shows the number of effective robots for pushing
and rotating the transported object, computed using
\ref{equ:center_push}. The number of effective pushers appears to
increase slowly as the robots progress towards the final target in all
cases, which could be due to distortions of the caging formation. The
number of effective rotators stays constant for most of the cases, but
increases during rotations, due to the robots either getting closer to
the corners or the mid point of the object. This could be caused by
the large error in the estimation of the centroid resulting in a
generation of biased control input to rotate the object.

\noindent \textbf{Robot Experiments} We perform a small set of
experiments using a group of 6 KheperaIV robots. The robots use a hub
to compute and transmit the range and bearing information from a
motion capture system, for more details on the experimental setup, we
refer the reader to~\cite{Varadharajan2020}. We performed two sets
of experiments with robots transporting a foam box of size (0.285, 0.435)m: without any payload, and with
\SI{4}{\kilogram} of LiPo battery on the
object. Fig.~\ref{fig:robot_Exp} shows the trajectories followed by
the robots and the inter-robot distance during the 3 runs without any
payload and one run with the payload. It can be observed that the
robots were able to consistently reach the target (0,0.9) by following
the 3 WPs placed at every 0.3m. The inter-robot distance between the
two adjacent robots in a cage approximates well the desired
$I_d=0.45m$ at caging and it is maintained consistently during pushing
in all runs with a maximum standard deviation of 0.1m.
\begin{figure}[tbp]
\centering
	\includegraphics[width=0.45\textwidth]{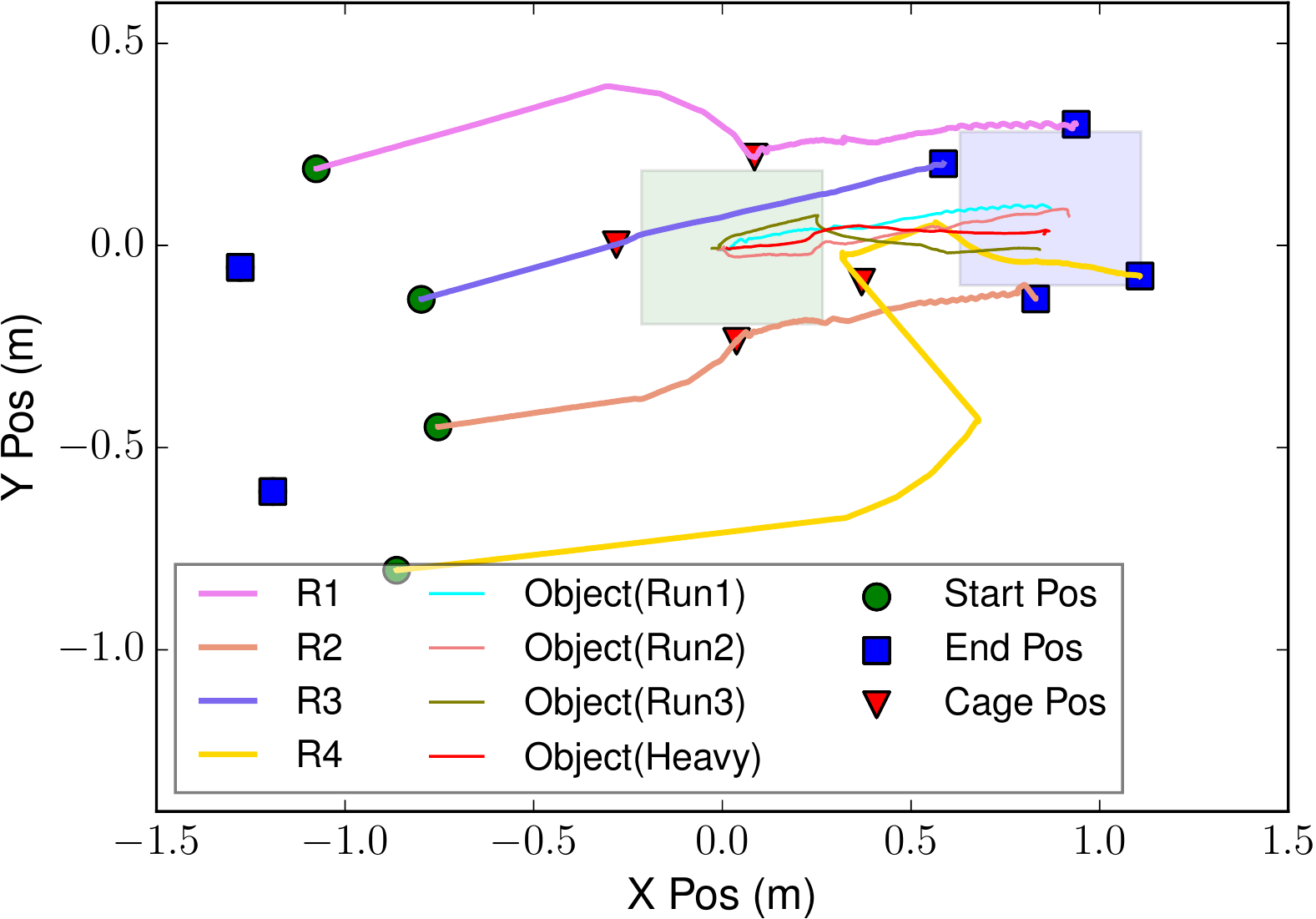}
	\includegraphics[width=0.45\textwidth]{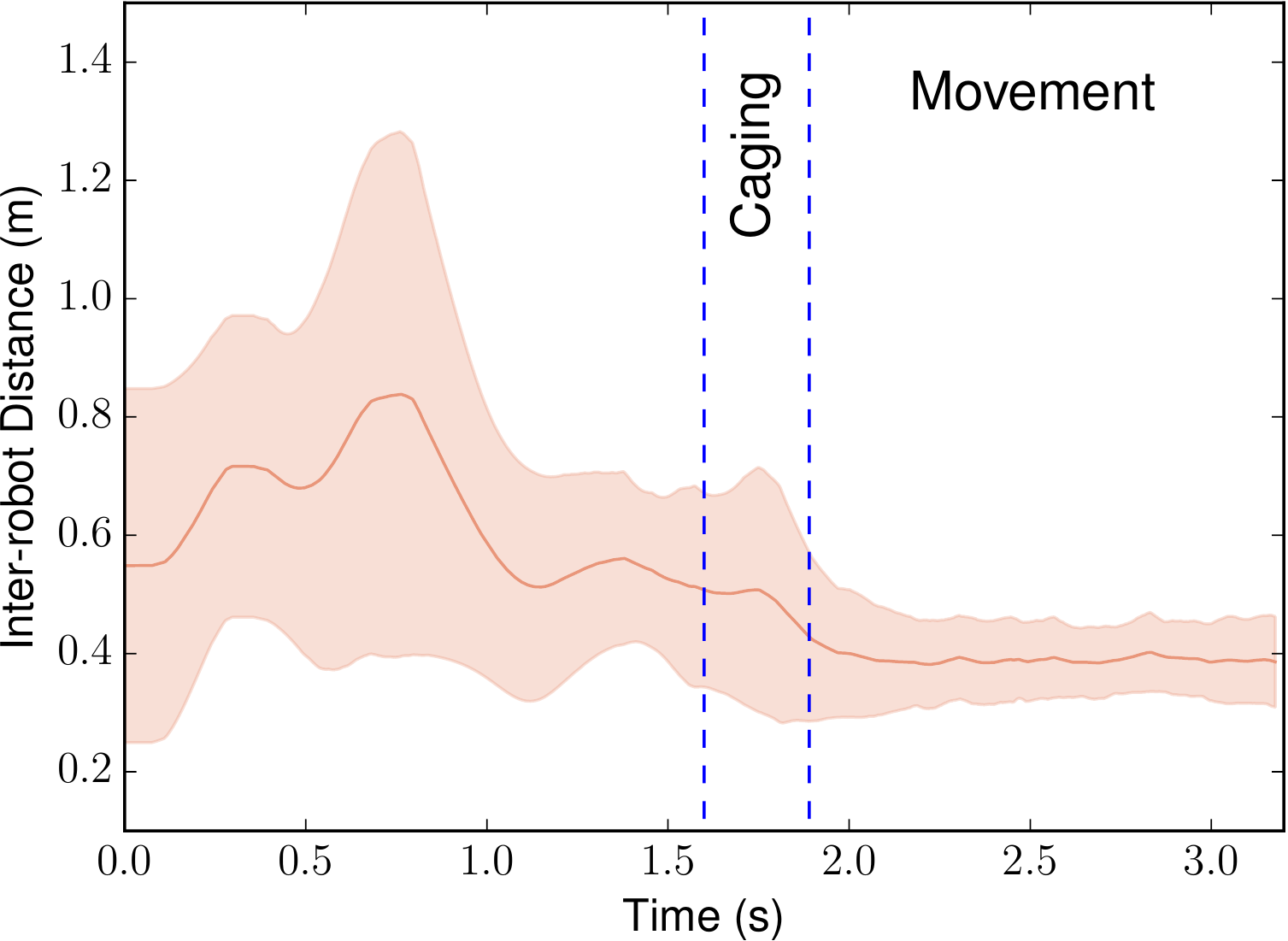}
	\caption{Trajectories taken by the robots (left) and the inter-robot distance between the two adjacent robot in the cage (right).}
\label{fig:robot_Exp}
\end{figure}%

\section{Conclusions \label{sec:conclusions}}
We propose a decentralized algorithm to cage an arbitrary-shaped
object and transport it along a desired path consisting of a set of
object poses. The robots periodically estimate the centroid of the
object based on the positional information shared by the robots caging
the object, and use this information to transport the object. We study
the performance of our algorithm using a large set of simulation
experiments with up to 100 robots traversing 3 benchmark paths and a
small set of experiments on KheperaIV robots. As future work, we plan
to implement a path planner to provide the object path in a cluttered
environment.

%
%
%
\bibliographystyle{plain}
\bibliography{dars}

\end{document}